\begin{document}

\title{Practical Marketplace Optimization at Uber Using Causally-Informed Machine Learning}


%


\author{Bobby Chen}
\authornote{Authors listed alphabetically.}
\authornote{Formerly at Uber Technologies, Inc.}

\author{Siyu Chen\footnotemark[1]}
\author{Jason Dowlatabadi\footnotemark[1]}
\author{Yu Xuan Hong\footnotemark[1]}
\author{Vinayak Iyer\footnotemark[1]}
\author{Uday Mantripragada\footnotemark[1]}
\affiliation{
  \institution{}
  \city{}
  \state{}
  \country{}
}

\author{Rishabh Narang\footnotemark[1]}
\author{Apoorv Pandey\footnotemark[1]}
\author{Zijun Qin\footnotemark[1]}
\author{Abrar Sheikh\footnotemark[1]}
\author{Hongtao Sun\footnotemark[1]\footnotemark[2]}
\author{Jiaqi Sun\footnotemark[1]}
\affiliation{
  \institution{Uber Technologies, Inc}
  \city{San Francisco}
  \state{CA}
  \country{USA}
}
\email{mhwalker,zijunq@uber.com}

\author{Matthew Walker\footnotemark[1]}
\author{Kaichen Wei\footnotemark[1]}
\author{Chen Xu\footnotemark[1]}
\author{Jingnan Yang\footnotemark[1]}
\author{Allen T. Zhang\footnotemark[1]}
\author{Guoqing Zhang\footnotemark[1]}
\affiliation{
  \institution{}
  \city{}
  \state{}
  \country{}
}


\begin{abstract}
    Budget allocation of marketplace levers, such as incentives for drivers to complete certain
    trips or promotions for riders to take more trips have long been both a technical and business challenge
    at Uber.
    It is crucial to understand the impact of lever budget changes on the market and to estimate their cost
    efficiency given the need to achieve predefined budgets, where the eventual goal is to find the optimal allocations
    under those constraints that maximize some objective of value to the business.
    In this paper, we introduce an end-to-end machine learning and optimization procedure to automate
    budget decision-making for cities where Uber operates.
    This procedure relies on a suite of applications, including feature store, model training and serving, optimizers and
    backtesting to measure the prediction and causal accuracy.
    We propose a state-of-the-art deep learning (DL) estimator based on S-Learner that leverages massive amount of user
    experimental and temporal-spatial observational data.
    We also built a novel tensor B-Spline regression model to enforce efficiency shape control while retaining the
    sophistication of the DL models’ response surface, and solved the high-dimensional optimization problem
    with Alternating Direction Method of Multipliers (ADMM) and primal-dual interior point convex optimization.
    This procedure has demonstrated substantial improvement in Uber's ability to allocate resources efficiently.
\end{abstract}

\begin{CCSXML}
<ccs2012>
   <concept>
       <concept_id>10010147.10010257.10010321</concept_id>
       <concept_desc>Computing methodologies~Machine learning algorithms</concept_desc>
       <concept_significance>500</concept_significance>
       </concept>
   <concept>
       <concept_id>10002944.10011123.10010916</concept_id>
       <concept_desc>General and reference~Measurement</concept_desc>
       <concept_significance>300</concept_significance>
       </concept>
   <concept>
       <concept_id>10002950.10003714.10003716</concept_id>
       <concept_desc>Mathematics of computing~Mathematical optimization</concept_desc>
       <concept_significance>500</concept_significance>
       </concept>
 </ccs2012>
\end{CCSXML}

\ccsdesc[500]{Computing methodologies~Machine learning algorithms}
\ccsdesc[300]{General and reference~Measurement}
\ccsdesc[500]{Mathematics of computing~Mathematical optimization}

\keywords{Causal machine learning, deep neural network, convex optimization, shape-constrained, measurement}


\maketitle

\section{Introduction}\label{sec:introduction}

Each year, Uber manages a multi-billion dollar marketplace.
In the third quarter of 2023 alone~\cite{uber_2023_q3_earnings}, Uber's mobility business recorded \$17.9 billion in
Gross Bookings (GBs)\@. To attract riders and drivers to use the products and to influence the behavior of the
marketplace,  Uber invests in various marketplace levers, allocating budgets across different regions and lever types to
optimize business objectives.
Historically, budget allocation was done manually, with teams reviewing data and adjusting numbers in spreadsheets based
on personal experience and past allocations. This method did not effectively utilize historical data on rider and
driver behavior under different conditions, lacked a clear objective, and was labor-intensive, making it difficult to
measure its effectiveness quantitatively.
We present an automated system that leverages causal deep learning models to predict how each marketplace
lever affects driver and rider behavior and a set of optimization algorithms to decide optimal budget allocation across
levers and regions.
The causal deep learning model leverages historical data to predict supply, demand, and a custom marketplace objective in the upcoming
week for each region and a given set of incentives.
The system then uses a smoothing model to generate high level functions between incentive budgets and the outcome for the
optimization stage.
Finally, the system finds the optimal budget allocation for each region and incentive lever subject to business
operations constraints, using an optimization system to solve a non-convex problem.
In addition, we also devised a framework and system to measure the business impact of the budget allocation when
compared to a baseline allocation.
This system is deployed in production on a weekly basis to allocate incentive budgets, and is proven to generate
significant impact to Uber's business.
This work contains the following contributions:
\begin{itemize}
	\item Introducing a novel S-Learner architecture that combines temporal-spatial observational data with finer-granularity experimental data in its learning objective.
	\item Developing a student model technique using b-splines that enforces efficiency, business intuition, and teacher model response shape control while marginalizing non-control features.
	\item Creating a fully distributed optimizer using ADMM and primal-dual interior point convex optimization, implemented on Ray to handle complex, nonlinear, non-convex problems.
	\item Proposing a new method for evaluating counterfactual outputs without a clean experimental environment for standard approaches like Randomized Control Trials (RCTs) or switchbacks.
\end{itemize}


\section{Background and Related Work}\label{sec:background-and-related-work}

Uber, being a publicly traded company, gives guidance to the public about expectations for performance on certain business metrics for each quarter.
Two important metrics are Gross Bookings (GB), which is a measure of how much business was conducted on the Uber platform, and Net Income (NI), which is the generally accepted accounting practice definition of profit~\cite{uber_2023_q3_earnings}.
The internal teams at Uber must make prediction on how different products can affect market and therefore affect the key financial metrics.
This financial task can be formulated as follows:
\begin{equation}
\label{eqn:multiweek_problem}
\begin{array}{ll@{}ll}
\text{maximize} & \sum\limits_{w \in \textrm{weeks}, c \in \textrm{cities}} & obj_{w,c}(\mathbf{b_{w,c}})&\\
\text{subject to} &  \sum\limits_{w \in \textrm{weeks}, c \in \textrm{cities}, l \in \textrm{levers}} & b_{w,c,l} = B & \\
& & b_{w,c,l} \geq h_{w,c,l} & \\
& & b_{w,c,l} \leq g_{w,c,l} &
\end{array}
\end{equation}
where weeks are calendar weeks, cities are approximately independent geographical entities, and levers are different
marketplace programs that Uber can allocate money to in order to affect marketplace outcomes.
The function $obj$ is some objective of importance to the business.
The variables $b_{w,c,l}$ represent budgets for specific weeks, cities, and levers, which can be positive or negative; we use the convention that positive budgets correspond to Uber spending. The variable $B$ is a total budget constraint on the system. The variables $h_{w,c,l}$ and $g_{w,c,l}$ represent floors and ceilings, respectively, on how much money can be spent in a lever in a specific city-week, which can again be positive or negative.

The objective of Equation~\ref{eqn:multiweek_problem} is to maximize business through-put subject to a fixed budget constraint.
The challenge is that the effects of budgeting a certain lever may not be linear and there may be substantial interaction effects between levers.
In the following discussion, we confine ourselves to discussing only Uber's Rides business and further confine the problem to a single week and the budgets are full-week treatments.

\subsection{Causal Marketplace Forecasting}

A critical component of solving this problem is how well we can predict the impact of changing the budgets under our control on $obj$\@.
Imagine a world that is described by an oracle:
\begin{equation}
\label{eqn:oracle}
obj(\mathbf{b}, \mathbf{X})
\end{equation}
where $\mathbf{b}$ and  $\mathbf{X}$ are the feature vectors for anything under our control and not under our control.

Our task is to produce an estimator of that oracle for a future week:
\begin{equation}
\label{eqn:estimator}
\widehat{obj}(\widehat{\mathbf{b}}, \widehat{\mathbf{X}}),
\end{equation}
where $\widehat{\mathbf{b}}$ is what we believe the feature vector under our control will be valued and $\widehat{\mathbf{X}}$ is what we predict any feature of interest will be valued. Importantly, $dim(\mathbf{b}) \stackrel{?}{=} dim(\widehat{\mathbf{b}})$ and $dim(\mathbf{X}) \stackrel{?}{=} dim(\widehat{\mathbf{X}})$, which is to say, neither control variables nor other confounding features can be guaranteed to be observed. The control variables have continuous values, in this case budgets.

Let us consider how data can be collected in this context. We are concerned with \emph{marketplace} outcomes, which are outcomes aggregated over some finer granularity. In our case, we are looking at outcomes aggregated per city per week. The finer granularity, in the case of Uber, could be riders, drivers, trips, etc. In a marketplace, it is generally assumed that there can be interference effects between treatment groups during experiments\cite{hu2022average}. Therefore, experimental results obtained by conducting a randomized controlled trial on a higher granularity treatment entity may not extrapolate to applying a treatment to the entire population of treated entities.

Since the standard assumptions of collecting unconfounded, independent, and identically distributed data do not apply, we can not rely on standard techniques or bounds to guarantee performance of any estimator. Instead, we must develop empirical techniques against which to test any estimator using both predictive accuracy, referring to how well absolute outcomes of the estimator agree with test data, as well as causal accuracy, referring to how well derivatives of the estimator agree with test data.

\subsection{Related Work}
\textbf{Time series forecasting} continues to be an area of significant interest in the literature. Autoregressive
statistical approaches related to ARIMA \cite{boxARIMA} remain performant in certain domains, including stock
prediction \cite{ariyo2014stock} and pandemic spread \cite{benvenuto2020application} prediction. In light of the success
of transformers \cite{VaswaniSPUJGKP17} in the natural-language processing space, many proposals have been made to apply transformers to time series prediction problems in order to model both multi-variate dependencies and overall complexity in the time series \cite{li2019enhancing, lim2021temporal, zhou2021informer, zhou2022film, zhou2022fedformer}. Recent results showing the effectiveness of simple linear models on standard datasets \cite{zeng2023transformers} has led to additional recent work on MLP-only approaches \cite{chen2023tsmixer}. In contrast to the literature mentioned above, which focuses primarily on high predictive accuracy, as measured generally by mean-squared error (MSE) and mean absolute error (MAE), we are concerned also with causal accuracy of the model. Furthermore, identification of a generally superior model architecture has so far defied researchers in the time series forecasting space, so performant models in new domains remain of interest.

\textbf{Causal ML} approaches have explored how to infer heterogeneous treatment effects using multiple different strategies\cite{imai2013estimating, athey2016recursive, hill2011bayesian, hahn2020bayesian, powers2018some, shalit2017estimating, du2019improve} and showing that, under certain conditions, certain error terms can be substantially reduced\cite{chernozhukov2018double, kunzel2019metalearners, nie2021quasi}. This body of work nearly entirely relies on two assumptions that we cannot rely on in practical marketplace applications. Relying on the notation that $X_i$ are per-unit features, $Y_i \in \mathbb{R}$ is the observed treatment effect and $W_i \in {0, 1}$ is the binary treatment assignment, the first assumption is that the treatment assignment $W_i$ is unconfounded, i.e. $\{Y_i(0), Y_i(1)\} \perp W_i | X_i$~\cite{nie2021quasi}. This is nearly always false in any available observational marketplace data. To give an obvious example: larger cities have larger budgets. The second assumption is that any data of interest is independent and identically distributed. Neither of these conditions hold under realistic marketplace conditions. An example of the first is that the level of treatment in one week may impact how users return in the following week, resulting in different returns. An example of the second is that there could be a major sporting event one week that results in very different user behavior compared to any other week. While some of the techniques are informative, the theoretical findings have no applicability to the practical case.


\section{Methodology}\label{sec:methodology}
In this section, we will focus on details of key components of the budget allocation system: the user causal effect estimator,
smoothing layer with B-Spline functions, optimizer and the business value evaluation framework.
A diagrammatic view of the entire system can be found in Appendix~\ref{sec:appendix_system}.


\subsection{User Causal Effect Estimator}\label{subsec:estimators}
To better leverage A/B experiments data which contains the information of user
behaviors between treatment and control groups, We adopted the S-metalearner~\cite{kunzel2019metalearners}
framework with a ResNet\cite{he2015deep} model to estimate individual treatment effects (ITE).

We have explored other metalearners such as the stacked R-metalearner, and the
decision towards S-metalearner took various considerations into account including feasibility
of available computing resources, the implementation and maintenance efforts required from a practical standpoint,
and experiment results in Appendix~\ref{sec:appendix_estimator} show S-metalearner provided better incremental efficiency with diminishing returns, aligning
well with business intuition.

As illustrated in Fig.~\ref{fig:base_learner}, the model consists of two stages:

\textbf{Embedding stage}
\begin{itemize}
    \item \emph{Sparse embeddings}. Our input features consist of some categorical sparse features.
    Since DNN’s are usually good at handling dense numerical features, we map the sparse features into
    trainable dense vectors using embeddings.
    \item \emph{Dense feature extractor}. We also extract an embedding representation for dense numerical features
    in order to better capture higher order interactions between the dense features.
    The dense feature embeddings are generated through multilayer perceptrons\cite{haykin1994neural} (MLP).
\end{itemize}

\textbf{Residual network stage}. Sparse and dense feature embeddings are concatenated and
fed into a series of ResNet blocks and eventually generate predictions.
ResNet blocks are chosen instead of MLP in order to better handle vanishing gradients and reduce the risk
of overfitting when feature sizes are large.

During analysis of deployed budget recommendations, we also diagnosed and resolved an
endogeneity~\cite{doi:10.1177/0149206320960533} issue that is naturally introduced from clustering data across cities
and weeks during model training.
It led to possible cross week extrapolations as the training data from different weeks could be endogenous and more
severely, the treatment-control effects are masked out by cross-week data.
This issue was resolved by adding additional positional encoding for the week in the training and serving data.

\begin{figure}[htbp]
\centering
\includegraphics[width=0.37\textwidth]{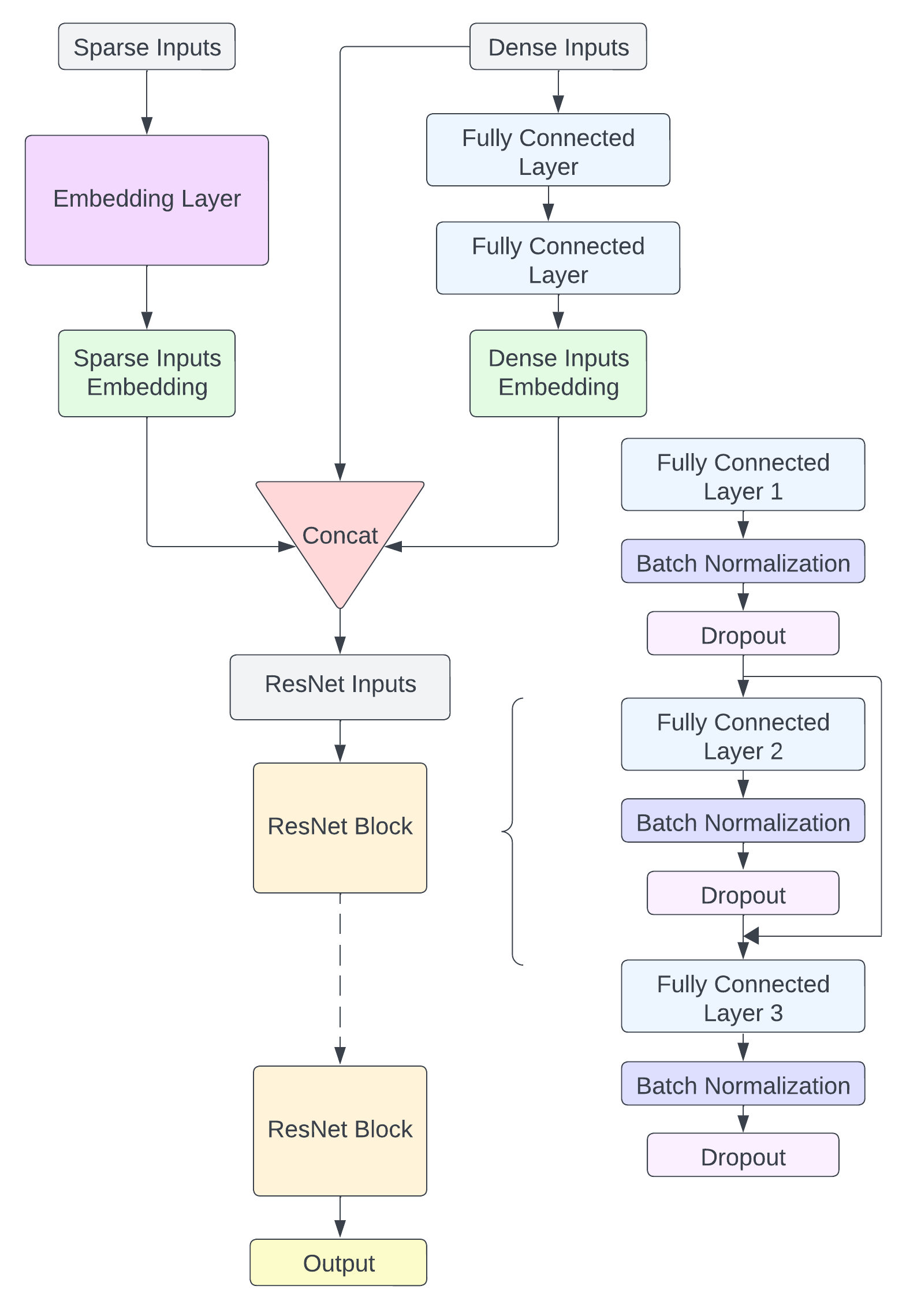}
\caption{Driver model base learner architecture}
\label{fig:base_learner}
\end{figure}


\subsection{Smoothing Layer}\label{subsec:smoothing-layer}
The DL model in Section~\ref{subsec:estimators} produces a mapping from budgets to marketplace outcomes,
which can be fed into an optimizer.
However, most optimization routines require numerous passes over the surface,
necessitating many evaluations of the DL model, which is computationally costly, potentially prohibitively so.
Additionally, the mapping produced by DL models does not necessarily satisfy business
intuition (e.g., non-negativity, monotonicity, convexity).
Enforcing such business intuition using flexible functional forms is generally intractable.
For instance, verifying polynomial non-negativity~\cite{blekherman2012semidefinite} and convexity~\cite{ahmadi2013np}
is NP-hard.

Our goal is to generate a differential surface for derivative-based optimization algorithms but also
to ensure low-cost evaluations and compliance with business intuitions.

We solve this bottleneck by capturing the DL surface with a lower-dimensional representation.
We begin by leveraging the Adaptive Sparse Grids (ASG) algorithm~\cite{smolyak1963quadrature, zenger1991sparse, griebel1991parallelizable, griebel1992combination, bungartz2004sparse}
to translate the DL surface to a computationally cheap grid-based representation.
Typically, grid-based representations suffer from a curse of dimensionality as the number of grid nodes increases
exponentially with dimensionality (in this case, the number of marketplace levers we are optimizing).
ASG solves this issue by limiting grid nodes to locations that result in the highest contributions
to accuracy and it serves as the foundation of the smoothing layer.

We then select B-spline functions as our smoothing model due to their flexibility and analytical derivative
forms~\cite{eilers1996flexible}.
Unlike monotonic regression\cite{10.1145/3357384.3357835}, B-splines offer the advantage of fitting derivatives,
thereby avoiding optimization issues and aligning better with business intuition by providing continuous
lever efficiencies rather than step-wise constants, B splines with quadratic basis functions allow us to enforce monotonicity and convexity,
enhancing tractability~\cite{pya2015shape}, while Cox regression\cite{Yu_Wu_Cai_Liu_Zhang_Gu_Zeng_Gu_2021}, often used in survival analysis,
yields monotonically decreasing curves and is non-convex.
Additionally, B-spline functions can adopt shape controls such as spot and universal derivative bounding,
in a multidimensional treatment scenario, without parametric assumptions,
thus reducing the risk of model misspecification.
Furthermore, we enhance the model causal accuracy by integrating network effects-corrected
budget efficiency experimental data through an additional penalty term in the objective function.

\begin{algorithm}[htbp]
    \caption{Adaptation algorithm: Build the grid iteratively}
    \SetKwBlock{Repeat}{repeat}{}
    \SetKwBlock{Until}{until}{}
    \label{alg:asg}
    Begin with a starting grid\;
    \Repeat{Compute hierarchical surplus at nodes; Find nodes with high hierarchical surplus, Add children nodes around these nodes}
    \textbf{Until} Recall hierarchical surplus measures residual error at each node, stop when all children nodes
    below maximum acceptable residual error $|\epsilon_{add}|$\;
\end{algorithm}

\subsubsection{Adaptive Sparse Grid}\label{subsubsec:adaptive-sparse-grid}
Adaptive sparse grids(ASG), constructed by preserving only a subset of the nodes on a dense grid, have been shown to
maintain a comparable level of error while alleviating this curse of dimensionality \cite{brumm2017adaptive, schaab2022adaptive}.
When the function representation is derived from linearly interpolating between nodes, errors in the representation stem
from not having sufficient resolution in areas where the function is highly nonlinear.
ASG iteratively refine the grid and add nodes where errors remain high.
The adaptation process is governed by a score called the hierarchical surplus, which essentially measures how much the
underlying objective function at a node deviates from a linear interpolation via the surrounding nodes.
High values of the surplus imply uncaptured local curvature, and suggest additional nodes nearby could be useful.
As an example, Figure \ref{fig:asg} below shows how the adapted sparse grid places greater density around the nonlinear portion of
the function $f(x, y) = (|0.5 - x4 - y4| + 0.1) - 1$.

\begin{figure}[htbp]
\centering
\includegraphics[width=0.4\textwidth]{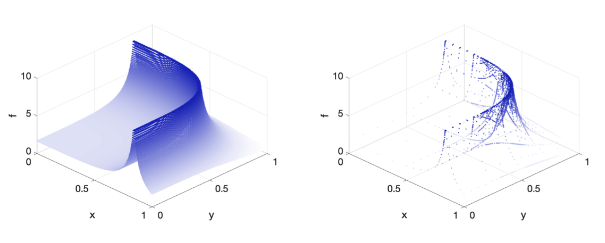}
\caption{Dense vs. Sparse Grid Representation}
\label{fig:asg}
\end{figure}

\subsubsection{B-Spline with Business Penalty}\label{subsubsec:b-splie-smoothing}
We formulate each city's cost surface smoothing as a constrained least square problem\cite{boyd2018introduction} in Equation~\eqref{eq:bsplineobj}.
The fitting objective function consists of two parts.
The first part is to minimize the sum of squared difference between B-spline smoothing model inferred
$\hat{obj}_{s}(\boldsymbol{x^g})$ and DL model inferred $obj_{r}(\boldsymbol{x^g})$ over all budget points
$\boldsymbol{x^g}=\{x_{l}|l=1,\dots,N \}$ in search space grid $ G$; while the second part is to minimize the sum of squared difference between B-spline inferred budget
efficiency and A/B experiment measured budget efficiency, which is essentially the $IOB$ introduced in Section \ref{subsec:evaluation}.
The variance of experimental data $\sigma^2_l(\boldsymbol{x^m})$ is incorporated as inverse weights to handle heteroscedasticity \cite{strutz2011data}.
The hyperparameter $\lambda$ determines the tradeoff between the objectives which can be tuned via
cross-validation to optimize business impact defined in Equation~\ref{eq:business_impact}.

\begin{align}
   & \underset{\hat{obj}_{s}(\cdot)}{\text{minimize}} &&  \sum\limits_{\boldsymbol{x^g} \in G} (\hat{obj}_{s}(\boldsymbol{x^g}) - obj_{r}(\boldsymbol{x^g}
    ))^2  \label{eq:bsplineobj} \\
    & & &  +  \lambda \sum\limits_{\boldsymbol{x^m} \in M} \sum_{l=1}^{N} \frac{1}{\sigma^2_l(\boldsymbol{x^m})} (\hat{IOB}_l(\boldsymbol{x^m}) -
    IOB_l(\boldsymbol{x^m}))^2  \notag \\
    & \textrm{s.t.} &&   \hat{obj}_{s}(\boldsymbol{x}) = \sum_{i_1=0}^{n_1-1} \cdots \sum_{i_N=0}^{n_N-1} a_{i_1\dots i_N} \prod_{l=1}^{N} B_{i_l}^{r_l}(x_l;\boldsymbol{t_l}) \\
    &  & &  \hat{IOB}_{l} (\boldsymbol{x}) =\frac{\partial \hat{obj}_s(\boldsymbol{x})}{\partial x_l} \geq 0, \ \forall l\in \{1,\dots,N\} \label{bsplinenonnegativity} \\
    & & &  \frac{\partial \hat{IOB}_{l}(\boldsymbol{x})}{\partial x_l} \leq 0 , \ \forall l\in \{1,\dots,N\} \label{bsplineconcavity}
\end{align}

where $ B_{i_l}^{r_l}(x_l;\boldsymbol{t_l})$ is the $i_l$th B-Spline basis function of degree $r_l$ over the knots $\boldsymbol{t_l}$ for the $l$th incentive lever.
The number of basis functions for $l$th lever is denoted as $n_l$. There are $\prod_{l=1}^{N}n_l$ B-spline basis function coefficient $a_{i_1\dots i_N}$ to fit.
Ignore the lever index $l$ hereafter for simplicity.
Knots $\boldsymbol{t}$ consist of $n - r + 1$ internal knots plus $r$ boundary knots on each side.
The complete knots vector $\boldsymbol{t}$ is equal to $\{t_{0},\cdots, t_{n+r}\}$.
Each basis function is defined by a recursive relationship.
The $0$th degree B-spline has the form
\begin{align} \label{eq:0thbasis}
    B_{i}^{0}(x;\boldsymbol{t}) = 1, \ \textrm{if} \ t_{i} \leq x < t_{i+1}, \ \textrm{otherwise} \ 0
\end{align}
and higher degree B-splines are constructed as
\begin{align}\label{eq:higherbasis}
    & B_{i}^{r}(x;\boldsymbol{t}) = \frac{x - t_{i}}{t_{i+r} - t_{i}}B_{i}^{r-1}(x;\boldsymbol{t}) + \frac{t_{i+r+1}-x}{t_{i+r+1} - t_{i+1}}B_{i+1}^{r-1}(x;\boldsymbol{t})
\end{align}

The sufficient conditions of the non-negative return of $l$th lever over $[t_{l,r_l},t_{l,n_r}]$ can be shown as
\begin{align}\label{eq:1stsufficient}
    a_{i_1\dots i_l\dots i_N} - a_{i_1\dots i_l-1\dots i_N} \geq 0, \ \forall i_l
\end{align}

and the sufficient conditions of the diminishing marginal return of $l$th lever can be shown as
\begin{align}\label{eq:2ndsufficient}
   &  \frac{a_{i_1\dots i_l\dots i_N}-a_{i_1\dots i_l-1\dots i_N}}{\Delta_{i_l}}   \leq \frac{a_{i_1\dots i_l-1\dots i_N}-a_{i_1\dots i_l-2\dots i_N}}{\Delta_{i_l-1}}, \forall i_l
\end{align}
where $\Delta_{i_l} = t_{l,i_l+r_l}-t_{l,i_l}$.

For a univariate B-spline with degree $r=2$, the above conditions are also necessary. See Appendix \ref{sec:appendix} for details.
\Cref{eq:1stsufficient,eq:2ndsufficient} are used to replace \cref{bsplinenonnegativity,bsplineconcavity} to make the optimization problem tractable.

%


\subsection{Optimizer}\label{subsec:optimization}

We use an optimization system to solve a global problem as described in Equation~\ref{eqn:multiweek_problem}, which seeks to solve an objective maximization problem under constraints. While, as discussed in Sec.~\ref{subsec:smoothing-layer}, we are able to guarantee monotonicity and concavity in individual dimensions, that does not extend to the entire surface.

In addition to the predictions from the $obj$ estimator, we also introduce a penalty term in the
form of Hellinger distance~\cite{hellinger1909neue} that forces the system to prefer allocations similar to a reference allocation:
\begin{equation}
\label{eqn:penalty_term}
\begin{array}{ll@{}ll}
pen(\mathbf{b}_w, \mathbf{b}_0)= &-\sqrt[-3]{\textrm{max}(\frac{||\mathbf{b}_w||}{||\mathbf{b}_0||}, \frac{||\mathbf{b}_0||}{||\mathbf{b}_w||}}) \times \\
& \sum\limits_{c \in \textrm{cities}} \alpha_c \sum\limits_{l \in \textrm{levers}} \kappa_l |\sqrt{\frac{|b_{w,c,l}|}{||\mathbf{b}_w||}} - \sqrt{\frac{|b_{0,c,l}|}{||\mathbf{b}_0||}}|^2,
\end{array}
\end{equation}
where $\mathbf{b}_w$ is the budget vector for the week to be optimized, $\mathbf{b}_0$ is the reference budget vector, $\alpha_c$ and $\kappa_l$ are scaling terms per city and lever, respectively, and the cube root term reduces the penalty term size as the budget difference increases, with the logic that the reference allocation should not be preferred if the total difference between the allocations is large.

In order to solve the non-linear, non-convex problem, which we know remains differentiable, we adopt an Alternating
Direction Method of Multipliers (ADMM)~\cite{wang2019global} approach. Specifically, we separated the single-city
cross-lever problem and the cross-city problem, and solved the problems iteratively.

Here is the reformation of the problem:
\begin{equation}
\label{eqn:admm_formulation}
\begin{array}{ll@{}ll}
    & \text{Minimize} \sum\limits_{c \in \textrm{cities}} f_c(\mathbf{b}_c) + g(\mathbf{z}) \\
    & \text{subject to}~ \mathbf{b}_c - \mathbf{z}_c = 0, c \in \textrm{cities}
\end{array}
\end{equation}

The resulting ADMM algorithm is,
\begin{equation}
\label{eqn:admm_updates}
\begin{array}{ll@{}ll}
    & \mathbf{b}_c^{k+1} := \mathop{argmin}\limits_{\mathbf{b}_c}(f_c(\mathbf{b}_c) + \rho / 2 \parallel \mathbf{b}_c - \mathbf{z}^k + \mathbf{y}_c^{k} \parallel_2^2) \\
    & \mathbf{z}^{k+1} := \mathop{argmin}\limits_{\mathbf{z}}(g(\mathbf{z}) + \rho / 2 \sum\limits_{c \in \textrm{cities}}  \parallel \mathbf{b}_c^{k+1} - \mathbf{z}_c + \mathbf{y}_c^{k}  \parallel_2^2) \\
    & \mathbf{y}_c^{k+1} := \mathbf{y}_c^{k} + \mathbf{b}_c^{k+1} - \mathbf{z}_c^{k+1},
\end{array}
\end{equation}
where we slightly abuse the notation, as $\mathbf{z}$ is the concatenation of $\mathbf{z}_c \forall c \in \textrm{cities}$.
From the equations, we can summarize the algorithm in 3 steps:
\begin{itemize}
    \item Update $\mathbf{b}$ step, this step considers all the single city constraints.
    \item Update $\mathbf{z}$ step, this step brings all the city together and only considers cross city optimization.
    \item Update $\mathbf{y}$ step, this step updates the center of the update x step.
\end{itemize}
Because our problem is close to convex, and the constraints applied to our smoothing model, by tuning $\rho$,
we are able to ensure that the overall problem has a positive, semidefinite Hessian.
The implementation of the algorithm is detailed in Appendix~\ref{sec:appendix_optimizer}.


\subsection{Business Value Evaluation}\label{subsec:evaluation}
In contrast to many other machine learning applications, which are mainly evaluated on predictive accuracy, our business use
case requires us to predict marketplace outcomes for the coming weeks and generate budget allocations that maximize
those outcomes.
Therefore, we need a customized evaluation framework to measure the quality of these budget allocations.
One common approach, running large-scale experiments in the marketplace, is prohibitively costly and impractical for continuous model evaluation.
Instead, we rely on existing A/B experiments that adjust budgets at the user level to estimate our business impact.

The Business Value Evaluation (BVE) framework combines existing A/B experimental data with simulated
allocations to produce estimates that can be directly interpreted as weekly business impact. In particular, this can be calculated as:
\begin{equation}
    \label{eq:business_impact}
    \text{Business Impact} = \Sigma_{l,c} \int_{B_{l,c,old}}^{B_{l,c,new}} IOB_{l,c} dB
\end{equation}
Where $l$ indexes incentive levers and c denotes cities, and IOB is the implied return on additional
budget from the existing A/B experimental data. To compute the integral, we take two approaches.
(1) A linear approximation of the integral which implies simply computing the difference in budget between the optimal and
actual allocations, then multiplying these differences by the return on this incremental budget.
(2) A log-linear relationship between budgets and IOB and estimates this relationship from historical data. This ensures that we
capture the curvature in the relationship between budget and IOB.

The key input of BVE is the incremental return on budget (IOB) which is estimated from user level A/B experiments which either make treatment more generous or target additional units.
Given these continuously running experiments, we can measure the lift of an additional dollar budget on our $obj$ with the
following regression as:
\begin{align*}
obj_i = \alpha + \beta \times Treatment_i + X_i \theta + \epsilon_i
\end{align*}
Where \(i\) refers to a treated unit and \(X_i\) are a vector of pre-XP covariates.
Adding in controls into this regression framework allows us to soak up idiosyncratic noise and increase precision \cite{athey2016econometrics}.
The coefficient  gives us the incremental lift on $obj$ per user (iO). Moreover, for each of these XPs, the incremental budget per user (iB) is deterministic and given by the XP design.
This implies that we can compute  \(\frac{\Delta obj}{\Delta B}\).

The main challenge in computing IOB based on unit level A/Bs at Uber is the presence of network effects or spillover effects.
What makes this a unique problem without standard off the shelf solutions is that the Uber marketplace has multiple rationing mechanisms during periods of relative undersupply; namely prices and ETAs (expected time to arrival).
This implies that the Stable Unit Treatment Value Assumption (SUTVA) is often violated, where outcomes for one unit depends on the treatment assignments of others.
As a simple example, allocating budget towards incentives which boost demand when there is acute undersupply can lead to user level A/B effects
grossly overestimating the true marketplace effect. The key mechanism behind this is that treated users on the demand side of the market might cannibalize supply which would have been otherwise available to the control group users.
To account for this issue, we developed an economic model which predicts the incremental marketplace
changes in endogenous quantities such as our $obj$ to changes in exogenous quantities like supply and demand. This model is characterized by a set of key elasticities which govern
how both users and aggregate marketplace outcomes respond to changes in Uber's rationing mechanisms; prices and ETAs. The paper closest in spirit to our approach is \cite{castillo2017surge} which develops a marketplace model to understand
how pricing can prevent the market from reaching undesirable equilibria.
Our approach is also similar to recent papers in the literature \cite{bimpikis2019spatial, castillo2023benefits, al2021drives}
which develop marketplace models in ride-sharing markets to study optimal pricing and matching policies. With key elasticities estimated from previously run XPs,
we can predict the change in our $obj$ due to changes in model inputs such sessions and supply hours.
We then use a first-order Taylor approximation to translate A/B outcomes into marketplace outcomes in the following way:
\begin{align*}
    \text{Marketplace Effect on Obj} = (\text{Effect of Model Inputs on Obj}) \\
    \times ~ (\text{A/B Effect on Model Inputs})
\end{align*}

Taking driver promotions as example, this implies that we use the calibrated model’s implied effect on marketplace outcomes
such as our $obj$ due to changes in supply and the A/B effect on supply to compute the marketplace effect on $obj$\@.

\section{Evaluation and Results}\label{sec:evaluation-and-results}
We conducted comprehensive backtesting using Uber marketplace historical data to evaluate the efficacy and robustness of our proposed
methodology. This approach allows us to measure incremental out-of-sample performance and the business impact of
counterfactual budget allocations from our system, all while fully accounting for complexities in the underlying data
generating process

We begin with predictive accuracy to ensure our model captures marketplace dynamics effectively.
We compute wMAPE (weighted Mean Absolute Percentage Error) and wBIAS (weighted BIAS) at the city-week level,
appropriately weighting cities of varying sizes and matches the granularity of the budget allocation decision.
Table \ref{tab:prediction_accuracy} shows how these metrics vary across methodologies and regions.
Across both regions, the Causal DL model substantially outperforms the stacked Estimator model in terms of wMAPE and wBIAS\@.

We complement these predictive accuracy metrics with business impact metrics from the aforementioned
Business Value Evaluation framework in Section \ref{subsec:evaluation}. Internally, we compute business impact from Equation \ref{eq:business_impact} in terms
of $obj$ when comparing candidate models for production usage. While we are unable to share this directly, we can share
the following closely related metric for each model:
\begin{align*}
    \text{Marginal Efficiency} = \frac{ \Sigma_{l,c} \text{B}_{l,c,opt}  \times ~ \text{IOB}_{l,c}}{\Sigma_{l,c} \text{B}_{l,c,opt}}
\end{align*}

Marginal efficiency captures the value of an additional dollar of budget allocated proportionally to all cities and levers by weighting
efficiency estimates accordingly. High marginal efficiency implies budgets are being deployed in an efficient manner.
Table \ref{tab:business_value_evaluation} reports the percentage change in this metric against a baseline model for our backtests which show that the Causal DL model with model enhancements outperforms the baseline model in both regions.

\begin{table}[htbp]
  \centering
  \caption{Prediction accuracy between stacked Estimator and Causal ML model}
  \label{tab:prediction_accuracy}
  \small
\begin{tabularx}{\linewidth}{XX|X|X}
    \toprule
    \multicolumn{2}{c}{\textbf{Dataset}} \vline & \textbf{Region US} & \textbf{Region BR} \\
    \hline
    \midrule
    Method & Metric & \\

    \multirow{2}{*}{\shortstack[1]{LGBM baseline}} & wMAPE \% & 7.109  & 7.454 \\
                                                         & wBIAS \% & -0.687 & -1.863 \\
    \hline
    \multirow{2}{*}{\shortstack[1]{Causal DL}} & wMAPE \% & \textbf{3.264} & \textbf{5.622} \\
                                               & wBIAS \% & \textbf{-0.115} & \textbf{0.645} \\
    \bottomrule
  \end{tabularx}
\end{table}

\begin{table}[htbp]
  \centering
  \caption{Marginal efficiency by method and region, percentage improvements relative to baseline model}
  \label{tab:business_value_evaluation}
  \begin{tabular}{c|c|c}
    \hline
    \textbf{Method} & \textbf{Region US} & \textbf{Region BR} \\
    \hline
    Causal DL w/ business penalty & 9.23\% & 2.59\% \\
    Causal DL w/ endogeneity fix & 8.00\% & 0.82\% \\
    \hline
  \end{tabular}
\end{table}

Finally, Figure \ref{fig:city_week_heatmap} shows an example of how budget allocations differ across select cities (y-axis)
and weeks (x-axis) between two different models. Colors and annotations indicate the percentage change in budgets across the models.
Importantly, note how the budget differences are mostly small. This allows us to comfortably
use a linearized version of Equation \ref{eq:business_impact} to measure business impact.

\begin{figure}[htbp]
\centering
\includegraphics[width=0.75\columnwidth]{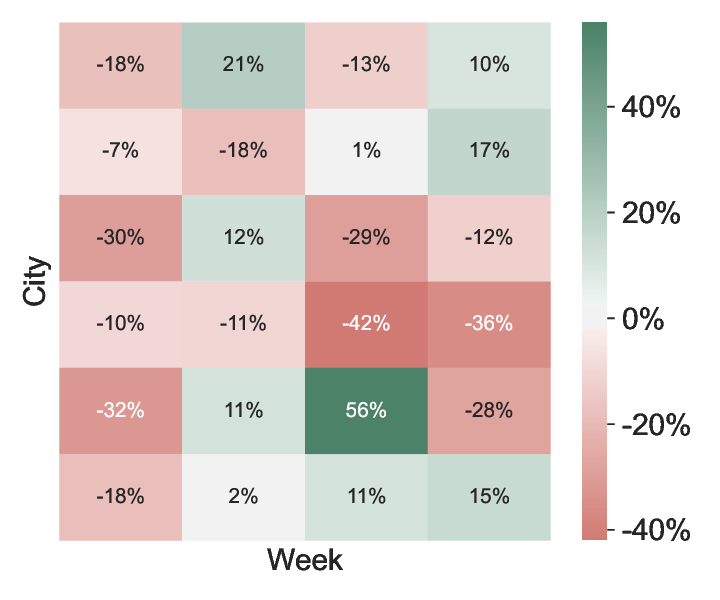}
\caption{Budget differences between two models}
\label{fig:city_week_heatmap}
\end{figure}

\section{Conclusion}\label{sec:conclusion}

We present an end-to-end causal machine learning and optimization system designed to enhance Uber's marketplace budget allocation process.
This system automates weekly budgeting decisions to maximize marketplace objectives within predefined constraints, improving overall operational efficiency.
Our approach addresses a high-dimensional optimization problem involving various marketplace levers and their respective budgets.
To solve this, we employ a deep learning estimator based on an S-Learner approach, leveraging extensive experimental and temporal-spatial observational data to estimate causal effects.
We also incorporate a tensor B-Spline regression model that captures the detailed response surface of DL models while ensuring practical efficiency.
Additionally, our system utilizes ADMM and primal-dual interior point convex optimization techniques, implemented on Ray, to handle large-scale nonlinear, non-convex problems efficiently.
The deployment of this system not only simplifies decision-making but ensures these decisions are data-driven and aligned with Uber's strategic objectives.

\begin{acks}
This work was supported by Uber Technologies Inc.
\end{acks}

\bibliography{./main-workshop.bib}
\bibliographystyle{plain}

\appendix
\section{System Overview}
\label{sec:appendix_system}
An end-to-end view containing all components of the budget allocation system described in this paper can be found in Fig.~\ref{fig:architecture}.

\begin{figure*}[htbp]
\centering
\includegraphics[width=0.85\textwidth]{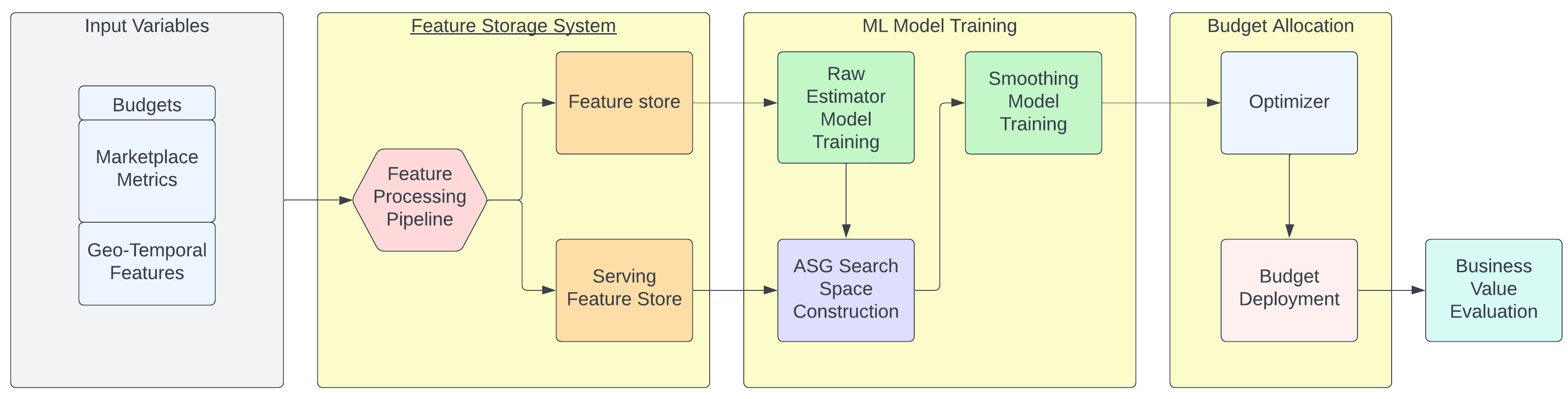}
\caption{Allocation system architecture: raw data processing into feature store, ML model training and serving, optimization, and evaluation.}
\label{fig:architecture}
\end{figure*}

\section{Metalearner Comparison}  \label{sec:appendix_estimator}
In Table~\ref{tab:comparison}, we evaluate the predicted incremental supply efficiency of
the S-learner with percentage error (BIAS) at the city-week granularity, which shows closer
alignment with experimental ground truth compared to the R-learner.
The untuned S-learner model exhibits a median BIAS similar to the current baseline performance.
Unlike the baseline, the S-learner does not consistently underestimate lever efficiency,
though certain weeks display significant prediction errors.

\begin{table*}[h!]
    \caption{Comparison of S-Learner and R-Learner Incremental Supply Efficiency Predictions}
    \label{tab:comparison}
    \centering
    \begin{tabularx}{\textwidth}{ccccccc}
        \toprule
        \textbf{Week} & \textbf{Experiment} & \textbf{Baseline Prediction} & \textbf{S-Learner Prediction} & \textbf{S-Learner BIAS} & \textbf{R-Learner Prediction} & \textbf{R-Learner BIAS} \\
        \midrule
        week 1 & 0.029 & 0.012 & 0.087 & 0.058 & 0.193 & 0.501 \\
        week 2  & 0.045 & 0.010 & 0.017 & -0.029 & 0.314 & 0.125 \\
        week 3  & 0.053 & 0.010 & 0.349 & 0.295 & 0.113 & 0.720 \\
        week 4 & 0.045 & 0.008 & 0.621 & 0.576 & 0.314 & 0.478 \\
        week 5 & 0.027 & 0.009 & 0.104 & 0.077 & 0.368 & 0.132 \\
        week 6 & 0.045 & 0.008 & 0.026 & -0.019 & 0.385 & 0.086 \\
        week 7  & 0.053 & 0.013 & 0.075 & 0.022 & 0.660 & 0.387 \\
        week 8 & 0.046 & 0.016 & 0.015 & -0.030 & 0.182 & 0.614 \\
        \bottomrule
    \end{tabularx}
\end{table*}

\section{B-spline Proofs}  \label{sec:appendix}
In this section, we discuss the monotonicity and concavity conditions of B-spline functions in more details and provide some proofs.

\begin{proposition} \label{prop:1stordernecessary}
    Given a univariate B-spline function
    \begin{align}
        S(x)=\sum_{i=0}^{n-1}a_{i}B_i^r(x;\boldsymbol{t})
    \end{align}
    with knots $\boldsymbol{t}=\{t_0,\dots,t_{n+1}\}$,
    the sufficient condition for $S(x)$ to be monotonically non-decreasing over $[t_r,t_{n}]$ is
    \begin{align}\label{eq:monotonicity_sufficient}
        a_{i}-a_{i-1}\geq 0, \ i=1,\dots,n-1
    \end{align}
\end{proposition}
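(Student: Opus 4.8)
The plan is to prove the pointwise statement that $S'(x)\ge 0$ for every $x$ in the (interior of the) interval $[t_r,t_n]$, from which monotone non-decrease on $[t_r,t_n]$ follows immediately by the fundamental theorem of calculus applied to the continuous function $S$. The first ingredient is the classical differentiation formula for a single basis function,
\[
\frac{d}{dx}B_i^r(x;\boldsymbol t)=\frac{r}{t_{i+r}-t_i}\,B_i^{r-1}(x;\boldsymbol t)-\frac{r}{t_{i+r+1}-t_{i+1}}\,B_{i+1}^{r-1}(x;\boldsymbol t),
\]
which I would obtain by induction on $r$: the base case is \eqref{eq:0thbasis} (whose derivative is zero away from the knots), and the inductive step is a direct differentiation of the two affine weights in \eqref{eq:higherbasis} followed by substitution of the inductive hypothesis and collection of the degree-$(r-1)$ terms.

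Next I would differentiate $S(x)=\sum_{i=0}^{n-1}a_iB_i^r(x;\boldsymbol t)$ termwise using that formula, shift the summation index in the group of terms carrying $B_{i+1}^{r-1}$, and combine the two contributions to each $B_i^{r-1}$. This telescoping produces
\[
S'(x)=\sum_{i=1}^{n-1}\frac{r\,(a_i-a_{i-1})}{t_{i+r}-t_i}\,B_i^{r-1}(x;\boldsymbol t)+(\text{two boundary terms}),
\]
where the boundary terms are scalar multiples of $B_0^{r-1}$ and of the final degree-$(r-1)$ basis function, carrying the bare coefficients $a_0$ and $a_{n-1}$ rather than differences. Their supports lie within $[t_0,t_r]$ and to the right of $t_n$, respectively, so both vanish on the open interval $(t_r,t_n)$; hence on that interval $S'$ is exactly the displayed sum over $i=1,\dots,n-1$.

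To finish, I would invoke two sign facts valid on $[t_r,t_n]$: each $B_i^{r-1}(x;\boldsymbol t)\ge 0$ — immediate from \eqref{eq:0thbasis} and preserved through the recursion \eqref{eq:higherbasis} because the two affine multipliers are nonnegative on the support of the respective lower-degree basis function — and each knot gap $t_{i+r}-t_i$ is strictly positive. Combined with the hypothesis \eqref{eq:monotonicity_sufficient} that $a_i-a_{i-1}\ge 0$, every summand in the formula for $S'$ is nonnegative, so $S'(x)\ge 0$ on $(t_r,t_n)$ and therefore $S$ is monotonically non-decreasing on $[t_r,t_n]$.

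The main obstacle I anticipate is the boundary bookkeeping in the second step: one must argue cleanly that the $i=0$ and $i=n$ contributions drop out on $[t_r,t_n]$, which depends on the support locations of the end basis functions of degree $r-1$ and hence on the boundary-knot multiplicities used in \cref{subsubsec:b-splie-smoothing}. With clamped boundary knots this is routine, but it is the one place where restricting to $[t_r,t_n]$ rather than the whole line is essential and where care is needed at the endpoints (handled by the continuity of $S$). The derivative identity and the termwise sign estimate are otherwise standard B-spline calculus.
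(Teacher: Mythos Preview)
Your proposal is correct and follows essentially the same route as the paper: both arguments compute $S'(x)$ via the standard B-spline derivative formula to obtain a degree-$(r-1)$ spline with coefficients proportional to $a_i-a_{i-1}$, then conclude non-negativity from the non-negativity of the basis functions and knot gaps. The paper simply cites de~Boor for the derivative identity and absorbs your boundary bookkeeping into the truncated knot vector $\boldsymbol{t'}=\boldsymbol{t}[1\colon-1]$, whereas you derive the identity explicitly and handle the endpoint terms by a support argument---but the underlying proof is the same.
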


\begin{proof}
    Following \cite{de1978practical,de1972calculating}, the first order derivative of $S(x)$ with respect to $x$ can be written as
    \begin{align} \label{eq:bsplinefirstorder}
        \frac{\partial S(x)}{\partial x}=\sum_{i=1}^{n-1} \frac{r}{\Delta_i}(a_i-a_{i-1})B_i^{r-1}(x;\boldsymbol{t^{\prime}})
    \end{align}
    where  $\Delta_{i} = t_{i+r}-t_{i}$.

    Note that $\frac{\partial S(x)}{\partial x}$ is a new B-spline function with degree $r-1$ and knots $\boldsymbol{t^{\prime}}=\boldsymbol{t}[1\colon -1]$.
    Since by definition (Equations \eqref{eq:0thbasis} and \eqref{eq:higherbasis}), B-spline basis $B_i^{r-1}(x;\boldsymbol{t^{\prime}})$ cannot be negative.
    Given $\Delta_{i}\geq 0$, it is clear that if $a_{i}-a_{i-1}\geq 0$ for $i=1,\dots,n-1$, then $\frac{\partial S(x)}{\partial x} \geq 0$
\end{proof}

\begin{proposition}\label{prop:2ndordernecessary}
    Given a univariate B-spline function
    \begin{align}
        S(x)=\sum_{i=0}^{n-1}a_{i}B_i^r(x;\boldsymbol{t})
    \end{align}
    with knots $\boldsymbol{t}=\{t_0,\dots,t_{n+1}\}$,
    the sufficient condition for $S(x)$ to be concave over $[t_r,t_{n}]$ is
    \begin{align}\label{eq:concavity_sufficient}
        &  \frac{a_{i}-a_{i-1}}{\Delta_{i}} -\frac{a_{i-1}-a_{i-2}}{\Delta_{i-1}}   \leq 0,  \ i=2,\dots,n-1
    \end{align}
    where $\Delta_{i} = t_{i+r}-t_{i}$
\end{proposition}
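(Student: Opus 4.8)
The plan is to obtain the second derivative of $S$ as a B-spline of degree $r-2$ by applying the differentiation identity \eqref{eq:bsplinefirstorder} twice, and then to read off the sign of $S''$ directly from the signs of the resulting coefficients. First I would recall from the proof of Proposition~\ref{prop:1stordernecessary} that
\begin{align*}
    \frac{\partial S(x)}{\partial x}=\sum_{i=1}^{n-1} c_i\, B_i^{r-1}(x;\boldsymbol{t^{\prime}}), \qquad c_i:=\frac{r}{\Delta_i}(a_i-a_{i-1}),
\end{align*}
which is itself a B-spline, now of degree $r-1$, on the trimmed knot vector $\boldsymbol{t^{\prime}}=\boldsymbol{t}[1\colon -1]$. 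Applying the same identity to this spline yields
\begin{align*}
    \frac{\partial^2 S(x)}{\partial x^2}=\sum_{i=2}^{n-1} \frac{r-1}{\Delta_i^{\prime}}\,(c_i-c_{i-1})\, B_i^{r-2}(x;\boldsymbol{t^{\prime\prime}}),
\end{align*}
where $\Delta_i^{\prime}$ is the (nonnegative) knot gap appropriate to $\boldsymbol{t^{\prime}}$ and $\boldsymbol{t^{\prime\prime}}=\boldsymbol{t^{\prime}}[1\colon -1]$.

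The key observation is that the new coefficients are exactly a positive multiple of the left-hand side of \eqref{eq:concavity_sufficient}: from the definition of $c_i$,
\begin{align*}
    c_i-c_{i-1}=r\!\left(\frac{a_i-a_{i-1}}{\Delta_i}-\frac{a_{i-1}-a_{i-2}}{\Delta_{i-1}}\right),
\end{align*}
so the hypothesis \eqref{eq:concavity_sufficient} says precisely that every coefficient $c_i-c_{i-1}$ of $\partial^2 S/\partial x^2$ is nonpositive. Since each basis function $B_i^{r-2}(x;\boldsymbol{t^{\prime\prime}})$ is nonnegative by the recursion \eqref{eq:0thbasis}--\eqref{eq:higherbasis}, the gaps satisfy $\Delta_i^{\prime}\geq 0$ because the knots are nondecreasing, and $r-1\geq 0$, every summand is nonpositive, hence $\partial^2 S/\partial x^2 \leq 0$ on $[t_r,t_n]$, which is the claimed concavity. (For $r=2$, the case actually used downstream, $B_i^{0}$ is an indicator and the argument is cleanest; for $r<2$ the statement is degenerate.)

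The main difficulty here is bookkeeping rather than substance: I need to verify that two successive applications of \eqref{eq:bsplinefirstorder} really produce the index range $i=2,\dots,n-1$ and the stated differences, including the standing convention that any term with a coincident-knot (zero) denominator is simply dropped, so that repeated boundary knots cause no trouble. I would also check that the degree-$(r-2)$ representation remains valid on all of $[t_r,t_n]$ after trimming two knots from each end of $\boldsymbol{t}$, so that the pointwise sign conclusion indeed holds on the full interval named in the statement. Once the indices are pinned down, the sign argument is immediate, exactly mirroring the proof of Proposition~\ref{prop:1stordernecessary}.
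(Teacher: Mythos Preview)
Your proposal is correct and follows essentially the same route as the paper: differentiate the spline twice using the derivative identity from the proof of Proposition~\ref{prop:1stordernecessary} to obtain a degree-$(r-2)$ B-spline with coefficients proportional to $\frac{a_i-a_{i-1}}{\Delta_i}-\frac{a_{i-1}-a_{i-2}}{\Delta_{i-1}}$, and conclude via nonnegativity of the basis functions and knot gaps. The paper writes your $\Delta_i'$ explicitly as $t_{i+r-1}-t_i$ but is otherwise identical, and your remarks on index bookkeeping, repeated-knot conventions, and the degenerate $r<2$ case are welcome additions that the paper omits.
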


\begin{proof}
    Based on Equation \eqref{eq:bsplinefirstorder}, the second order derivative of $S(x)$ with respect to $x$ can be written as
    \begin{align} \label{eq:bsplinesecondorder}
    & & \frac{\partial^2 S(x)}{\partial x^2}=\sum_{i=2}^{n-1} \frac{r-1}{t_{i+r-1}-t_i}[\frac{r}{\Delta_i}(a_i-a_{i-1}) \notag \\ & & -\frac{r}{\Delta_{i-1}}(a_{i-1}-a_{i-2})] B_i^{r-2}(x;\boldsymbol{t^{\prime\prime}})
    \end{align}
    where  $\Delta_{i} = t_{i+r}-t_{i}$.

    Note that $\frac{\partial^2 S(x)}{\partial x^2}$ is a new B-spline function with degree $r-2$ and knots $\boldsymbol{t^{\prime\prime}}=\boldsymbol{t}[2 \colon -2]$.

    Similarly, since B-spline basis $B_i^{r-2}(x;\boldsymbol{t^{\prime\prime}})$ and $t_{i+r-1}-t_i$ are non-negative, it is clear that if $\frac{r}{\Delta_i}(a_i-a_{i-1})-\frac{r}{\Delta_{i-1}}(a_{i-1}-a_{i-2}) \leq 0$ for $i=2,\dots,n-1$, then $\frac{\partial^2 S(x)}{\partial x^2} \leq 0$

\end{proof}

\begin{proposition}\label{prop:1stordersufficient}
    Given a univariate B-spline function
    \begin{align}
        S(x)=\sum_{i=0}^{n-1}a_{i}B_i^r(x;\boldsymbol{t})
    \end{align}
    with knots $\boldsymbol{t}=\{t_0,\dots,t_{n+1}\}$,
    when the basis function degree $r=2$, Equation \eqref{eq:monotonicity_sufficient} is also a necessary condition for $S(x)$ to be monotonically non-decreasing over $[t_2,t_{n}]$
\end{proposition}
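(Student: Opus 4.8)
The plan is to prove necessity (sufficiency is \cref{prop:1stordernecessary}), and the whole argument rests on the fact that for $r=2$ the derivative of $S$ is \emph{piecewise linear}. Specializing the derivative identity \eqref{eq:bsplinefirstorder} to $r=2$ gives
\[
\frac{\partial S(x)}{\partial x}=\sum_{i=1}^{n-1}\frac{2}{\Delta_i}(a_i-a_{i-1})\,B_i^{1}(x;\boldsymbol{t^{\prime}}),\qquad \boldsymbol{t^{\prime}}=\boldsymbol{t}[1\colon-1],
\]
so $S^{\prime}$ is again a B-spline, now of degree $1$, with coefficients $c_i:=\tfrac{2}{\Delta_i}(a_i-a_{i-1})$. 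Here $\Delta_i=t_{i+2}-t_i>0$ (assuming, as usual, distinct interior knots), so $c_i$ has the same sign as $a_i-a_{i-1}$, and it is enough to show that $S^{\prime}\ge 0$ on $[t_2,t_n]$ forces $c_i\ge 0$ for every $i$.

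Next I would use the elementary structure of degree-$1$ B-splines: each basis function $B_i^{1}(\cdot;\boldsymbol{t^{\prime}})$ is the hat function that equals $1$ at its central knot and $0$ at every other knot, so the combination $\sum_i c_i B_i^{1}$ is precisely the continuous piecewise-linear function interpolating the value $c_i$ at the $i$-th knot of $\boldsymbol{t^{\prime}}$. Hence, restricted to $[t_2,t_n]$, $S^{\prime}$ is continuous and piecewise linear, with breakpoint values (up to the positive factors above) equal to the consecutive differences $a_i-a_{i-1}$, $i=1,\dots,n-1$. A continuous piecewise-linear function is non-negative on a closed interval if and only if it is non-negative at all of its breakpoints, since on each linear piece the minimum is attained at an endpoint of that piece. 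Therefore $S^{\prime}\ge 0$ on $[t_2,t_n]$ is equivalent to $c_i\ge 0$ for all $i$, i.e.\ to \eqref{eq:monotonicity_sufficient}; together with ``$S$ non-decreasing on $[t_2,t_n]$ $\iff$ $S^{\prime}\ge 0$ there'' this gives the claimed equivalence.

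The main obstacle is not conceptual but bookkeeping: one must check that, under the paper's knot convention, every coefficient $c_i$ with $i=1,\dots,n-1$ does appear as a breakpoint value of $S^{\prime}$ inside the \emph{closed} interval $[t_2,t_n]$ (so that no difference $a_i-a_{i-1}$ is missed and no spurious breakpoint is introduced at the ends), and one must say a word about coincident knots, where a hat function degenerates: at a knot of multiplicity greater than one the relevant nodal value is the appropriate one-sided limit of $S^{\prime}$, which is still a convex combination of the $c_i$ adjacent to it, so the sign conclusion is unchanged. With distinct interior knots these points are immediate. I would also note why the argument is special to $r=2$: for $r\ge 3$ the derivative $S^{\prime}$ is a spline of degree $\ge 2$ whose graph may lie strictly above a chord of its control polygon, so $S^{\prime}\ge 0$ no longer forces the coefficient differences to be non-negative, and \eqref{eq:monotonicity_sufficient} is then only sufficient.
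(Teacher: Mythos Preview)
Your argument is correct and rests on the same key observation the paper uses: for $r=2$ the derivative $S'$ is a degree-$1$ spline, so each coefficient $c_i=\tfrac{2}{\Delta_i}(a_i-a_{i-1})$ is actually attained as a value of $S'$ on $[t_2,t_n]$, which makes non-negativity of all the $c_i$ necessary. The packaging differs slightly. The paper isolates a single subinterval $[t_i,t_{i+1})$ and, via its \cref{lemma:1degreebasis}, writes $S'$ there as $c_i\cdot\frac{x-t_i}{t_{i+1}-t_i}$, then reads off the sign of $c_i$ directly from that expression. You instead invoke the hat-function nodal-interpolation property globally and evaluate $S'$ at the breakpoints, appealing to the fact that a continuous piecewise-linear function attains its minimum at a node. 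Your route is a touch cleaner---it avoids the subinterval bookkeeping, makes the ``minimum at a breakpoint'' step explicit, and also explains transparently why the argument does not extend to $r\ge 3$---but the underlying mechanism is the same.
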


\begin{proof}
    When $r=2$, the first order derivative in Equation \eqref{eq:bsplinefirstorder} can be written as
    \begin{align}
    \frac{\partial S(x)}{\partial x}=\sum_{i=1}^{n-1} \frac{2}{\Delta_i}(a_i-a_{i-1})B_i^{1}(x;\boldsymbol{t^{\prime}})
    \end{align}
    where  $\Delta_{i} = t_{i+2}-t_{i}$.

    By Lemma \ref{lemma:1degreebasis}, when $x \in [t_i, t_{i+1})$, $ \frac{\partial S(x)}{\partial x}$ can be written as

    \begin{align}
        \frac{\partial S(x)}{\partial x}= \frac{2}{\Delta_i}(a_i-a_{i-1})\frac{x - t_{i}}{t_{i+1} - t_{i}}
    \end{align}

    It is clear that $\frac{\partial S(x)}{\partial x}\geq 0$ only if $a_i-a_{i-1}\geq 0$. It is straightforward to generalize this for all $x \in [t_2, t_n]$

\end{proof}

\begin{lemma} \label{lemma:1degreebasis}
    The degree 1 B-spline basis function $B_i^{1}(x;\boldsymbol{t})=\frac{x - t_{i}}{t_{i+1} - t_{i}}$ for $x \in [t_i, t_{i+1})$
\end{lemma}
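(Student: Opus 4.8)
The plan is to unwind the recursion in Equation~\eqref{eq:higherbasis} exactly once, with $r=1$, and then specialize to the subinterval $[t_i,t_{i+1})$. Concretely, setting $r=1$ in \eqref{eq:higherbasis} gives
\begin{align*}
B_i^1(x;\boldsymbol{t}) = \frac{x-t_i}{t_{i+1}-t_i}\,B_i^0(x;\boldsymbol{t}) + \frac{t_{i+2}-x}{t_{i+2}-t_{i+1}}\,B_{i+1}^0(x;\boldsymbol{t}),
\end{align*}
so everything reduces to evaluating the two degree-$0$ basis functions on $[t_i,t_{i+1})$ using the definition in Equation~\eqref{eq:0thbasis}.

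The key step is the case analysis on the indicator functions. For $x\in[t_i,t_{i+1})$ we have $t_i \le x < t_{i+1}$, so by \eqref{eq:0thbasis} $B_i^0(x;\boldsymbol{t}) = 1$. On the other hand, $B_{i+1}^0(x;\boldsymbol{t})$ equals $1$ only when $t_{i+1}\le x < t_{i+2}$; since $x < t_{i+1}$ here, that condition fails and $B_{i+1}^0(x;\boldsymbol{t}) = 0$. Substituting these two values into the expansion above, the second term drops out and we are left with $B_i^1(x;\boldsymbol{t}) = \dfrac{x-t_i}{t_{i+1}-t_i}$, which is the claim.

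The only point requiring any care — and the closest thing to an obstacle — is making sure the half-open interval conventions of \eqref{eq:0thbasis} are applied consistently, so that at the left endpoint $x=t_i$ the first term is active while the second is genuinely inactive for every $x<t_{i+1}$; this is what guarantees the clean linear form on the whole half-open interval rather than just its interior. Everything else is a direct substitution, so no further machinery is needed.
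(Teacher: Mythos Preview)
Your proposal is correct and follows essentially the same route as the paper: apply the recursion \eqref{eq:higherbasis} with $r=1$ and then use \eqref{eq:0thbasis} to evaluate the two degree-$0$ indicators on $[t_i,t_{i+1})$. If anything, your version is slightly more explicit about why $B_{i+1}^0$ vanishes on this interval, but the argument is the same.
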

\begin{proof}
    Based on Equation \eqref{eq:0thbasis}, when $x \notin [t_i, t_{i+1})$, we have $B_i^{0}(x;\boldsymbol{t})=0$

    Then, based on Equation \eqref{eq:higherbasis}, we have
    \begin{align}
        B_{i}^{1}(x;\boldsymbol{t}) =
        \begin{cases}
            \frac{x - t_{i}}{t_{i+1} - t_{i}} & \text{when } x \in [t_i, t_{i+1}) \\
            0                                 & \text{otherwise }
        \end{cases}
    \end{align}
\end{proof}

\begin{proposition} \label{prop:2ndordersufficient}
    Given a univariate B-spline function
    \begin{align}
        S(x)=\sum_{i=0}^{n-1}a_{i}B_i^r(x;\boldsymbol{t})
    \end{align}
    with knots $\boldsymbol{t}=\{t_0,\dots,t_{n+1}\}$,
    when the basis function degree $r=2$, Equation \eqref{eq:concavity_sufficient} is also a necessary condition for $S(x)$ to be concave over $[t_2,t_{n}]$
\end{proposition}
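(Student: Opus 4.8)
The plan is to specialize the second-derivative formula of Proposition~\ref{prop:2ndordernecessary} to $r=2$ and exploit the fact that the resulting object is a \emph{degree-zero} B-spline, hence piecewise constant with no interpolation error inside a cell. Setting $r=2$ in Equation~\eqref{eq:bsplinesecondorder} gives
\begin{align*}
\frac{\partial^2 S(x)}{\partial x^2}=\sum_{i=2}^{n-1}\frac{1}{t_{i+1}-t_i}\left[\frac{2}{\Delta_i}(a_i-a_{i-1})-\frac{2}{\Delta_{i-1}}(a_{i-1}-a_{i-2})\right]B_i^{0}(x;\boldsymbol{t^{\prime\prime}}),
\end{align*}
where $\boldsymbol{t^{\prime\prime}}=\boldsymbol{t}[2\colon-2]$ and $\Delta_i=t_{i+2}-t_i$. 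By the definition in Equation~\eqref{eq:0thbasis}, each $B_i^{0}(\cdot;\boldsymbol{t^{\prime\prime}})$ is the indicator of a half-open knot interval, these intervals are pairwise disjoint, and their union is $[t_2,t_n]$; consequently $\partial^2 S/\partial x^2$ is constant on each such cell, taking there the value $\frac{1}{t_{i+1}-t_i}\left[\frac{2}{\Delta_i}(a_i-a_{i-1})-\frac{2}{\Delta_{i-1}}(a_{i-1}-a_{i-2})\right]$.

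First I would establish this ``tiling'' claim directly from Equation~\eqref{eq:0thbasis} — it is the degree-zero analogue of Lemma~\ref{lemma:1degreebasis} used in the proof of Proposition~\ref{prop:1stordersufficient}, and is in fact simpler. Then, since $t_{i+1}-t_i>0$ for all relevant $i$ (assuming, as throughout, that the knot spacings are nondegenerate), the sign of $\partial^2 S/\partial x^2$ on the $i$-th cell equals the sign of the bracketed quantity $\frac{2}{\Delta_i}(a_i-a_{i-1})-\frac{2}{\Delta_{i-1}}(a_{i-1}-a_{i-2})$. The necessity then follows by contraposition: if Equation~\eqref{eq:concavity_sufficient} fails for some $i_0\in\{2,\dots,n-1\}$, the corresponding bracket is strictly positive, so $\partial^2 S/\partial x^2>0$ on an entire cell of positive length inside $[t_2,t_n]$, hence $S$ is strictly convex there and cannot be concave over $[t_2,t_n]$. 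Combined with the sufficiency already proved in Proposition~\ref{prop:2ndordernecessary}, this yields the equivalence for $r=2$.

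The main obstacle is purely bookkeeping: confirming that the consecutive knots of the twice-truncated vector $\boldsymbol{t^{\prime\prime}}$ are exactly the endpoints of the cells on which each $B_i^0(\cdot;\boldsymbol{t^{\prime\prime}})$ is supported, so that the indexing in the displayed sum matches the indexing $i=2,\dots,n-1$ in Equation~\eqref{eq:concavity_sufficient}, and handling the boundary cells at $t_2$ and $t_n$. Once that alignment is pinned down, the argument is immediate, because a degree-zero B-spline carries no curvature within a cell, so ``$\partial^2 S/\partial x^2\le 0$ on all of $[t_2,t_n]$'' is literally equivalent to ``$\partial^2 S/\partial x^2\le 0$ cell by cell'', which is Equation~\eqref{eq:concavity_sufficient}.
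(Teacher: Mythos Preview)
Your proposal is correct and follows essentially the same route as the paper: specialize Equation~\eqref{eq:bsplinesecondorder} to $r=2$, use that the degree-zero bases $B_i^{0}(\cdot;\boldsymbol{t^{\prime\prime}})$ are disjoint indicator functions so that $\partial^2 S/\partial x^2$ is piecewise constant on $[t_2,t_n]$, and read off necessity cell by cell. Your treatment is in fact slightly more careful, since you retain the positive prefactor $1/(t_{i+1}-t_i)$ and explicitly argue it does not affect the sign, whereas the paper silently drops it when evaluating on a cell.
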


\begin{proof}
    When $r=2$, the second order derivative in Equation \eqref{eq:bsplinesecondorder} can be written as
    \begin{align}
    & & \frac{\partial^2 S(x)}{\partial x^2}=\sum_{i=2}^{n-1} \frac{1}{t_{i+1}-t_i}[\frac{2}{\Delta_i}(a_i-a_{i-1})\\ & & -\frac{2}{\Delta_{i-1}}(a_{i-1}-a_{i-2})] B_i^{0}(x;\boldsymbol{t^{\prime\prime}})
    \end{align}
    where  $\Delta_{i} = t_{i+2}-t_{i}$ and knots $\boldsymbol{t^{\prime\prime}}=\boldsymbol{t}[2 \colon -2]$..

    Based on Equation \eqref{eq:0thbasis}, for any $t_{2+i} \leq x < t_{2+i+1}, i=0,\dots,n-1$, $\frac{\partial^2 S(x)}{\partial x^2}$ is equal to
    \begin{align}
        \frac{\partial^2 S(x)}{\partial x^2} = \frac{2}{\Delta_i}(a_i-a_{i-1}) -\frac{2}{\Delta_{i-1}}(a_{i-1}-a_{i-2})
    \end{align}
    because other bases are 0.

    It is clear that $\frac{\partial^2 S(x)}{\partial x^2} \leq 0 $ over $[t_2,t_n]$ only if $\frac{2}{\Delta_i}(a_i-a_{i-1}) -\frac{2}{\Delta_{i-1}}(a_{i-1}-a_{i-2}) \leq 0, \forall i $

\end{proof}

\begin{proposition}
\Cref{prop:1stordernecessary,prop:2ndordernecessary} can be generalized to any multivariate B-spline function.
\end{proposition}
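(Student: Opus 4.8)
The plan is to reduce the multivariate statement to the univariate Propositions~\ref{prop:1stordernecessary} and~\ref{prop:2ndordernecessary} by freezing all but one coordinate and recognizing the resulting slice as an ordinary univariate B-spline whose coefficients are \emph{non-negatively weighted} combinations of the original tensor coefficients. Fix a lever index $l$ and fix the remaining coordinates $\boldsymbol{x}_{-l} = (x_j)_{j\neq l}$. Since the outer sums are finite and $B_{i_l}^{r_l}(x_l;\boldsymbol{t_l})$ is the only factor depending on $x_l$, collecting terms gives
\begin{align*}
\hat{obj}_s(\boldsymbol{x}) &= \sum_{i_l=0}^{n_l-1} \tilde{a}_{i_l}(\boldsymbol{x}_{-l})\, B_{i_l}^{r_l}(x_l;\boldsymbol{t_l}), \\
\tilde{a}_{i_l}(\boldsymbol{x}_{-l}) &:= \sum_{i_j:\, j\neq l} a_{i_1\dots i_N} \prod_{j\neq l} B_{i_j}^{r_j}(x_j;\boldsymbol{t_j}),
\end{align*}
so that, with $\boldsymbol{x}_{-l}$ held fixed, $x_l \mapsto \hat{obj}_s(\boldsymbol{x})$ is exactly a univariate B-spline of degree $r_l$ over the knot vector $\boldsymbol{t_l}$ with coefficients $\tilde{a}_{i_l}(\boldsymbol{x}_{-l})$. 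Differentiation in $x_l$ therefore commutes with the sums over $i_j$, $j\neq l$, and the shape properties can be read off the one-dimensional results applied coordinate by coordinate.

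The second step is to translate the multivariate coefficient conditions \eqref{eq:1stsufficient} and \eqref{eq:2ndsufficient} into the univariate ones, \eqref{eq:monotonicity_sufficient} and \eqref{eq:concavity_sufficient}, for the slice coefficients. Because first differences commute with finite sums,
\begin{align*}
\tilde{a}_{i_l}(\boldsymbol{x}_{-l}) - \tilde{a}_{i_l-1}(\boldsymbol{x}_{-l}) = \sum_{i_j:\, j\neq l} \big(a_{i_1\dots i_l\dots i_N} - a_{i_1\dots i_l-1\dots i_N}\big)\, \prod_{j\neq l} B_{i_j}^{r_j}(x_j;\boldsymbol{t_j}),
\end{align*}
and every B-spline basis function is non-negative by the recursion \eqref{eq:0thbasis}--\eqref{eq:higherbasis}; hence \eqref{eq:1stsufficient} forces $\tilde{a}_{i_l}(\boldsymbol{x}_{-l}) - \tilde{a}_{i_l-1}(\boldsymbol{x}_{-l}) \geq 0$ for every fixed $\boldsymbol{x}_{-l}$, and Proposition~\ref{prop:1stordernecessary} gives $\partial \hat{obj}_s/\partial x_l \geq 0$ on the box $\prod_{l'=1}^{N}[t_{l',r_{l'}},\, t_{l',n_{l'}}]$. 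For concavity, the key observation is that $\Delta_{i_l} = t_{l,i_l+r_l} - t_{l,i_l}$ depends only on $\boldsymbol{t_l}$ and not on the other indices, so it factors out of $\sum_{i_j:\, j\neq l}$; the second-difference quantity appearing in \eqref{eq:2ndsufficient}, applied to the $\tilde{a}_{i_l}$, is then again a non-negatively weighted sum of the original second differences, and Proposition~\ref{prop:2ndordernecessary} yields $\partial^2 \hat{obj}_s/\partial x_l^2 \leq 0$ on the same box.

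I expect the only real effort to be bookkeeping rather than anything deep: checking that differentiating in $x_l$ leaves the remaining tensor factors untouched (immediate for a product over distinct variables with finite sums), and pinning down the domain on which the one-dimensional conclusions hold. Two caveats belong in the write-up. First, the generalization gives monotonicity and concavity \emph{separately along each coordinate axis}, which is exactly what the shape constraints \eqref{bsplinenonnegativity}--\eqref{bsplineconcavity} require of the smoothing model; it does not imply joint concavity of the full response surface, consistent with the discussion in Section~\ref{subsec:optimization}. Second, the necessity half for $r_l = 2$ (Propositions~\ref{prop:1stordersufficient} and~\ref{prop:2ndordersufficient}) does not generalize along this route, because as $\boldsymbol{x}_{-l}$ varies the slice coefficients $\tilde{a}_{i_l}(\boldsymbol{x}_{-l})$ sweep out only a subset of $\mathbb{R}^{n_l}$; accordingly the multivariate statement should be phrased as a generalization of the sufficient conditions only.
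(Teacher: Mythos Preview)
Your proposal is correct and follows essentially the same route as the paper: freeze all coordinates except $x_l$, use the tensor-product structure to recognize a univariate B-spline in $x_l$, and invoke non-negativity of the basis functions to carry the coefficient inequalities through. The only cosmetic difference is that the paper writes the slice as a non-negatively weighted sum of many univariate B-splines (one per outer multi-index) and applies Propositions~\ref{prop:1stordernecessary}--\ref{prop:2ndordernecessary} to each, whereas you collapse the weights into effective coefficients $\tilde{a}_{i_l}(\boldsymbol{x}_{-l})$ first and apply the propositions once; your added caveats on coordinate-wise versus joint concavity and on the non-extension of necessity are accurate and not stated in the paper.
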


\begin{proof}
    A multivariate B-spline function can be written as
    \begin{align}
        S(x_1,\dots,x_N)=\sum_{i_1=0}^{n_1-1} \cdots \sum_{i_N=0}^{n_N-1} a_{i_1\dots i_N} \prod_{l=1}^{N} B_{i_l}^{r_l}(x_l;\boldsymbol{t_l})
    \end{align}
    Without loss of generality, assume we are interested in its partial derivative with respect to $x_j$. $S(x_1,\dots,x_N)$ can be re-arranged as follows to separate items involving $x_j$ and others
    \begin{align}
        S(x_1,\dots,x_N)=\sum_{i_1=0}^{n_1-1} \cdots \sum_{i_N=0}^{n_N-1}\prod_{l=1,l\neq j}^{N} B_{i_l}^{r_l}(x_l;\boldsymbol{t_l})\sum_{i_j=0}^{n_j-1}a_{i_1\dots i_N}B_{i_j}^{r_j}(x_j;\boldsymbol{t_j})
    \end{align}

    Denote
    \begin{align}
        C(x_{l\neq j}|i_{l\neq j})=\prod_{l=1,l\neq j}^{N} B_{i_l}^{r_l}(x_l;\boldsymbol{t_l})
    \end{align}
    it is always non-negative because it is a product of multiple B-spline bases and each B-spline basis cannot be negative by definition.

    Denote
    \begin{align}
        S(x_j|i_{l\neq j})=\sum_{i_j=0}^{n_j-1}a_{i_1\dots i_N}B_{i_j}^{r_j}(x_j;\boldsymbol{t_j})
    \end{align}
    it is a univariate B-spline function.

    $S(x_1,\dots,x_N)$ can be written as
    \begin{align}
        S(x_1,\dots,x_N)=\sum_{i_1=0}^{n_1-1} \cdots \sum_{i_N=0}^{n_N-1}C(x_{l\neq j}|i_{l\neq j})S(x_j|i_{l\neq j})
    \end{align}

    The derivative of $S(x_1,\dots,x_N)$ with respect to $x_j$ is equal to a sum of non-negative $C(x_{l\neq j}|i_{l\neq j})$ times the derivative of univariate B-spline $S(x_j|i_{l\neq j})$ with respect to $x_j$.
    Hence, \Cref{prop:1stordernecessary,prop:2ndordernecessary} for a univariate B-spline function can be extended to a multivariate B-spline function.
\end{proof}

\section{Optimizer Implementation}  \label{sec:appendix_optimizer}
In order to solve the per-city problems in the $\mathbf{b}$-update step, we use \texttt{cvxopt}~\cite{
    andersen2020cvxopt}, which is a primal-dual interior-point convex solver that uses Nesterov-Todd scaling~\cite{nesterov1998primal}. Since the modified problem is both city-independent and convex, this inner solver works well for us.

A major challenge to solving the problem in a single thread, instead of in parallel, is that the constraint complexity is high, and collinear constraints can affect the Cholesky decomposition in the Nesterov-Todd scaling step. The single city constraints are much easier to manage and therefore never cause this problem in the \texttt{cvxopt} step. Regardless, managing constraints, including online validations when receiving constraints from client teams remains a key practical effort of the system.

Recall that our constraints that cross multiple cities are equality constraints. Therefore, $g(z)$ in Equation~\ref{eqn:admm_formulation} is essentially an indicator function that is zero-valued when the constraint is met and takes a large value when the constraint is not met.

The update $\mathbf{z}$ step can therefore be translated to a simple quadratic optimization problem. We can then use
the Cauchy-Schwartz inequality to simplify further, eventually resulting in an algebraic expression, that for the
constraint in Equation~\ref{eqn:multiweek_problem}, results in:
\begin{equation}
\label{eqn:cs_zstep}
\mathbf{z}^{k+1}_c := \frac{B-  \mathbf{e} \cdot \sum\limits_{c \in \textrm{cities}} \mathbf{u}^k_c}{|C| |L|} \mathbf{e} + \mathbf{u}^k_c,
\end{equation}
where $\mathbf{u}^k_c := \mathbf{y}^k_c + \mathbf{b}^{k+1}_c$ and $\mathbf{e}$ is a vector of all ones. This simplification allows the update $\mathbf{z}$ step to be calculated algebraically.

Several practical improvements allow the system to scale well and perform with high stability in production:
\begin{itemize}
    \item \textbf{Horizontal scaling with Ray}.~\cite{moritz2018ray} Because the update x step is handling each city individually, we can run optimization for each city in parallel. By leveraging the Ray framework, we achieved approximately a 50x speedup per iteration in that step.
    \item \textbf{Adaptive} $\mathbf{\rho}$. In order to improve stability of the system, we adaptively increase $\rho$ when non-convex sub-problems are detected. As mentioned above, this modifies the eigenvalues of the Hessian to ensure convexness.
    \item \textbf{Early Stopping \& Optimal criteria}. The ADMM algorithm can quickly get to a point where it is close to the global optima, but it takes a long time to provide a solution with multi-digit accuracy. Our implementation requires a compromise between runtime and accuracy. We define a solution is optimal if:
    \begin{itemize}
        \item Constraints are obeyed within 0.1\% accuracy
        \item All sub problems (single city) are converged
        \item The change in objective value is small from the previous iteration
        \item Primal and dual feasibility are satisfied
    \end{itemize}
\end{itemize}

\end{document}